\begin{document}
\title{\textrm{Leveraging Elastic Demand for Forecasting}}

\author{Houtao Deng}
\author{Ganesh Krishnan}
\author{Ji Chen}
\author{Dong Liang \footnote{Dong Liang contributed to this work during his employment at Instacart during 1/30/2017 to 6/16/2017.}}
\address{Maplebear Inc. d/b/a Instacart}

\begin{abstract}
Demand variance can result in a mismatch between planned supply and actual demand. Demand shaping strategies such as pricing can be used to reduce the imbalance between supply and demand. In this work, we propose to consider the demand shaping factor in forecasting. We present a method to reallocate the historical elastic demand to reduce variance, thus making forecasting and supply planning more effective. \end{abstract}

\begin{keyword}
demand forecasting; demand shaping; demand variance; supply planning
\end{keyword}

\maketitle

\newtheorem{Definition}{Definition}
\newtheorem{Lemma}{Lemma}
\newtheorem{theorem}{Theorem}

\section{Introduction}

Demand forecasting can help businesses estimate future demand and plan supply. Take Instacart as an example. We forecast hourly demand using historical data, as illustrated in Figure \ref{fig:Time_series_original_1}. The following steps can be used to forecast demand for a future hour (e.g., 10-11 am). First, extract the historical demand for the hour. Figure \ref{fig:Time_series_original_10} shows the 10-11 demand in the past 10 days. Then, use a time series model to forecast demand for the hour.

\def\myWidth{5}
\begin{figure}[!]
\centering \includegraphics[width=1.05\linewidth]{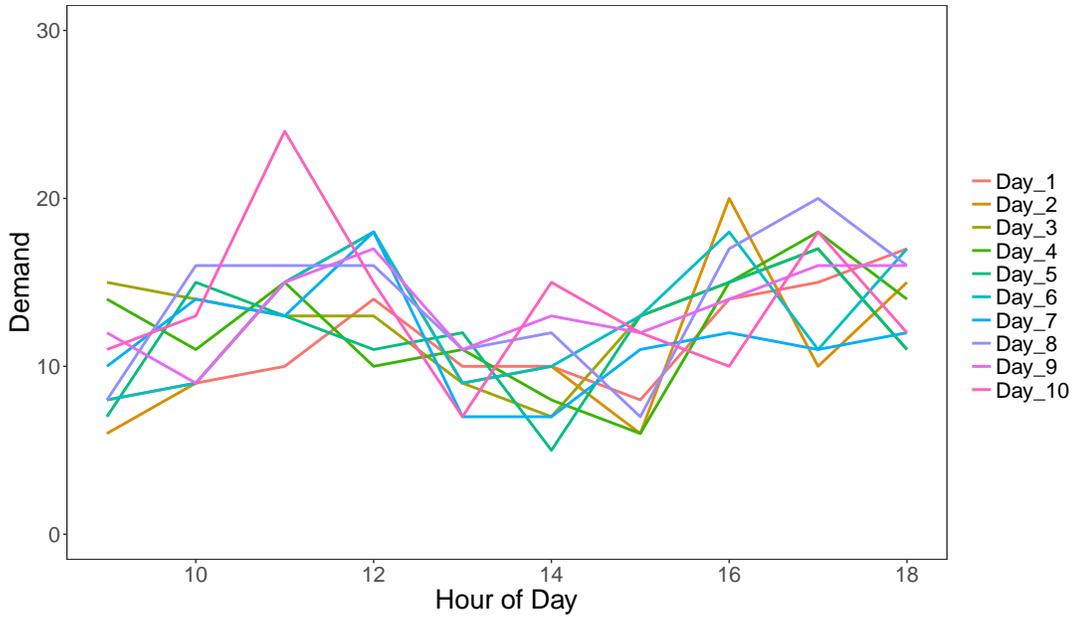}
\caption{Hourly demand from the past 10 days (data are simulated for illustrative purposes). \label{fig:Time_series_original_1}}
\end{figure}

\begin{figure}[!htb]
   \begin{minipage}{0.48\textwidth}
     \centering
     \includegraphics[height=0.6\linewidth]{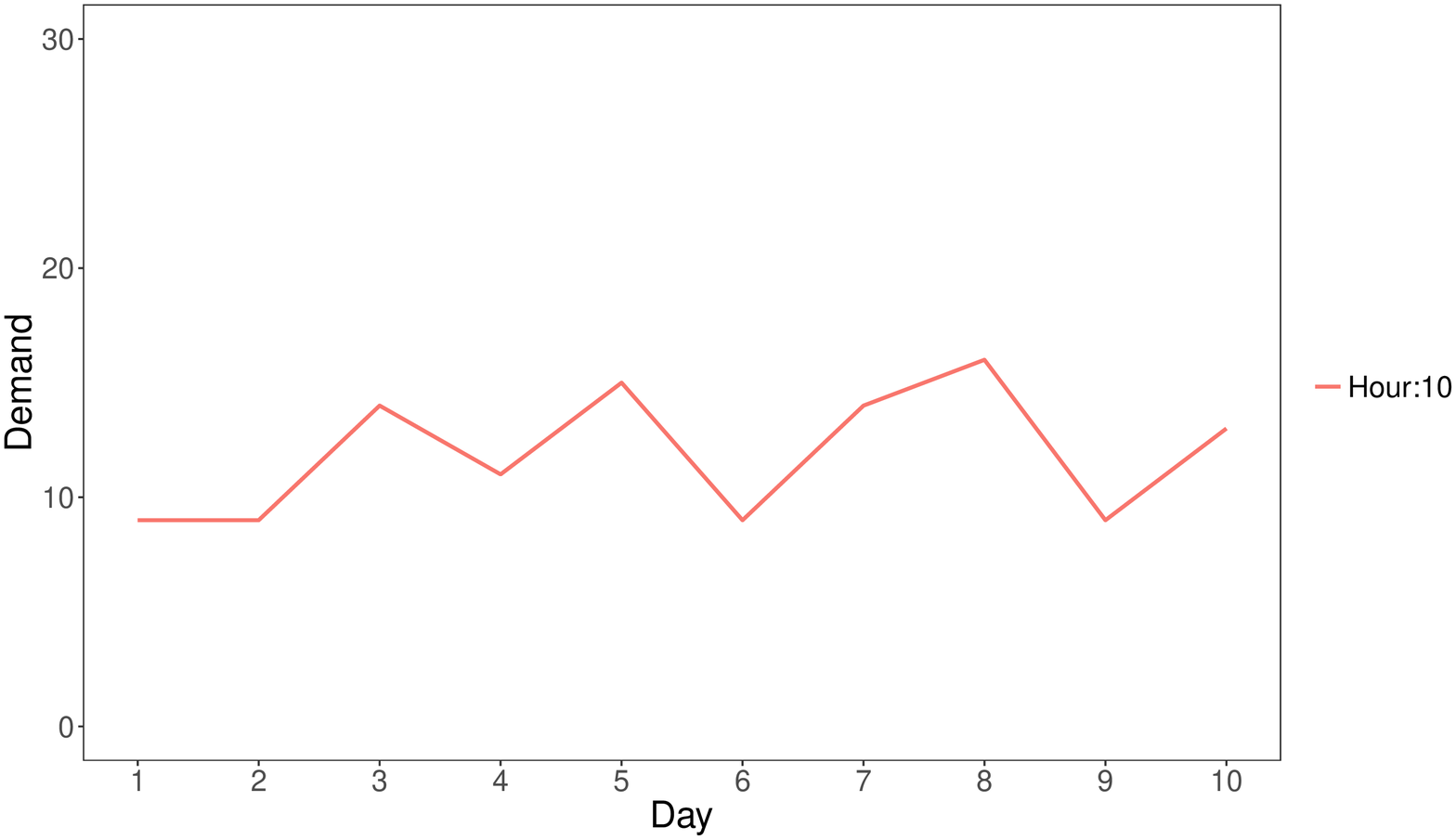}
     \caption{Demand at 10-11 am in the past 10 days. \label{fig:Time_series_original_10}}
   \end{minipage}\hfill
   \begin{minipage}{0.48\textwidth}
     \centering
     \includegraphics[height=0.6\linewidth]{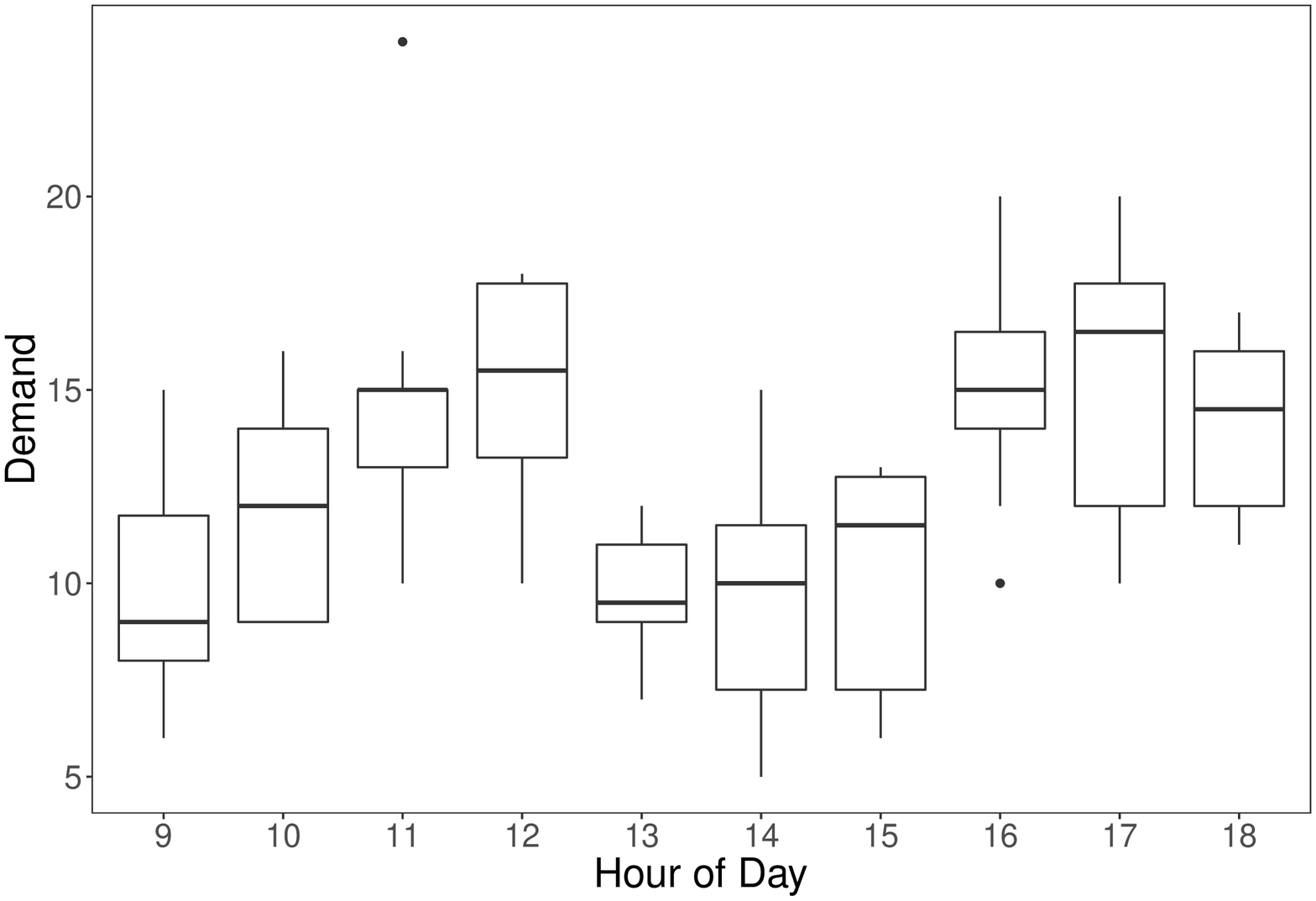}
     \caption{Variability of the demand at each hour (a box spans the first quartile to the third quartile). \label{fig:Box_plot_var}}
   \end{minipage}
\end{figure}

Although time series models can capture trend and seasonality in demand, there can still be a large unexplained variance making forecasting challenging. For the 10-11 demand time series shown in Figure \ref{fig:Time_series_original_10}, there is no obvious seasonality or trend, and the unexplained variance is high. Figure \ref{fig:Box_plot_var} shows a box plot demonstrating the demand variability at each hour from the data in Figure \ref{fig:Time_series_original_10}.

The impact of variance on forecasting and supply planning is illustrated as follows. Table \ref{tab:patter12} describes two historical demand patterns (each with a probability of 50\%). Consider two forecasts, one using the average of the two patterns at each hour, and another just using pattern 1.

\begin{table}[h]
\begin{center}
\begin{tabular}{ccccc}
\hline
Hour & Pattern 1 & Pattern 2 & Forecast 1 & Forecast 2 \\
\hline
9 - 10 & 10 & 20 & 15 & 10 \\
10 - 11 & 20 & 10 & 15 & 20 \\
\hline
\end{tabular}
\end{center}
\caption{ Historical demand patterns each with a probability of 50\%. \label{tab:patter12}}
\end{table}

Assume one unit of supply can serve one unit of demand, and the cost of losing one unit of demand or holding one unit of excessive supply is 1. The expected costs for the two forecasts are shown in Table \ref{tab:patter12cost}. Both forecasts cost 10 in total. The variance makes a zero-cost forecast seemingly impossible.

In this work, we propose to consider elastic demand (demand responsive to demand shaping tactics) in forecasting so that the variance can be reduced.

\begin{table}[h]
\begin{center}
\scalebox{0.8}{
\begin{tabular}{cccccc}
\hline
& Hour & Forecast & Pattern 1 (cost) & Pattern 2 (cost) & Expected cost \\
\hline
Forecast 1 & 9 - 10 & 15 & 10 (5) & 20 (5) & 0.5*5+0.5*5=5 \\
& 10 - 11 & 15 & 20 (5) & 10 (5) & 0.5*5+0.5*5=5 \\
\hline
Forecast 2 & 9 - 10 & 10 & 10 (0) & 20 (10) & 0.5*0+0.5*10=5 \\
& 10 - 11 & 20 & 20 (0) & 10 (10) & 0.5*0+0.5*10=5 \\
\hline
\end{tabular}
}
\end{center}
\caption{ The expected costs of two forecasts. \label{tab:patter12cost}}
\end{table}

\def\myWidth{2}
\begin{figure}[!]
\centering \includegraphics[width= \myWidth in]{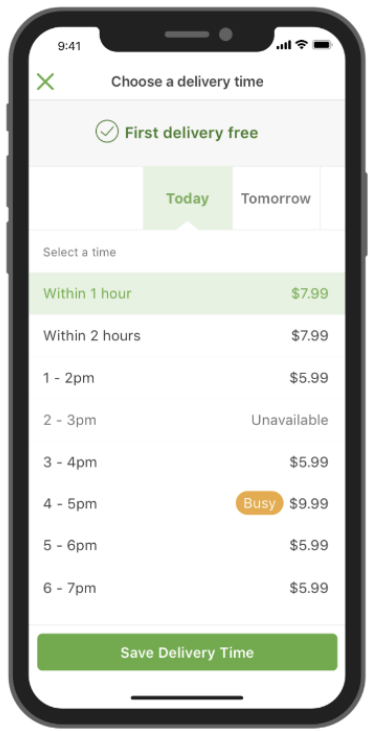}
\caption{Illustration of the availability and pricing information of the delivery windows on Instacart. \label{fig:availability}}
\end{figure}

\section{Related work}
Demand variance poses challenges in supply chain management. Failure to account for demand variance could either lead to unsatisfied customer demand translating to a loss of market share or excessively high inventory holding costs \citep{petkov1997multiperiod,gupta2003managing}.

Strategies have been proposed for reducing demand variance. In a context where orders occur at fixed intervals, suppliers' demand variance will generally decline as the customers' order interval is lengthened \citep{cachon1999managing}. For a manufacturing process consisting of multiple stages, it was observed that reversing two consecutive stages of a process could lead to variance reduction \citep{lee1998variability}. These strategies may require substantial changes to an existing product or process.

Demand shaping strategies such as pricing have been used for reducing demand variance \citep{zhu2016managing}, but most work focuses on the demand side. In this work, we consider elastic demand in the forecasting process, so that the variance in demand can be reduced, and supply planning can be more effective.

\section{Elastic Demand}

In some situations, certain customers are flexible on the product/service options, and so demand shaping strategies such as pricing can influence their choices. The demand responsive to demand shaping tactics is referred to as the elastic demand. Note some historical orders were influenced by demand shaping. Based on the availability and pricing information customers saw (illustrated in Figure \ref{fig:availability}), we can infer the choices customers would have made without demand shaping.

For the example shown in Figure \ref{fig:availability}, the ``within 1 hour" and ``within 2 hours" options are more expensive than the ``1-2pm" option. We also know customers typically prefer faster deliveries. Therefore, for customers placing orders for 1-2pm, there is a high chance that they could have chosen ``within 1 hour" given the same price.

In the rest of this work, we assume the amount of elastic demand is known and focus on how to reallocate it.

For the example in Table 1, assume 10 units of demand at 9-10 in pattern 2 is elastic and can be reallocated to 10-11. As illustrated in Figure \ref{fig:shapable_demand_bars}, the new demand can be any integer value between 10 and 20 at 9-10, and between 10 and 20 at 10-11, respectively.

\def\myWidth{3}
\begin{figure}[!]
\centering \includegraphics[width= \myWidth in]{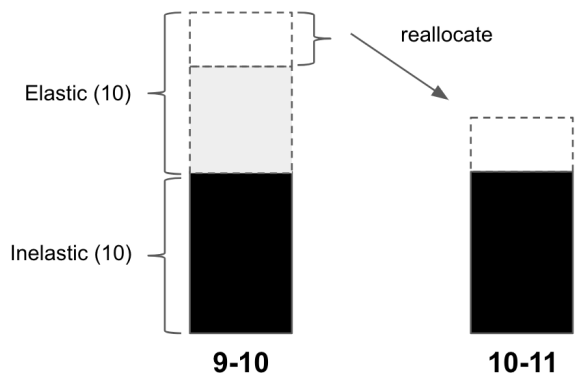}
\caption{9-10 has 10 units of elastic demand and a subset (0\%-100\%) can be reallocated to 10-11. \label{fig:shapable_demand_bars}}
\end{figure}

With elastic demand considered, both forecasts have smaller expected costs and forecast 2 has a zero cost, as shown in Table 3.

\begin{table}[h]
\begin{center}
\scalebox{0.8}{
\begin{tabular}{cccccc}
\hline
& Hour & Forecast & Pattern 1 (cost) & Pattern 2 New (cost) & Expected cost \\
\hline
Forecast 1 & 9 - 10 & 15 & 10 (5) & [20,19,...,10] (0) & 0.5*5+0.5*0=2.5 \\
& 10 - 11 & 15 & 20 (5) & [10,11,...,20] (0) & 0.5*5+0.5*0=2.5 \\
\hline
Forecast 2 & 9 - 10 & 10 & 10 (0) & [20,19,...,10] (0) & 0 \\
& 10 - 11 & 20 & 30 (0) & [10,11,...,20] (0) & 0 \\
\hline
\end{tabular}
}
\end{center}
\caption{ Expected costs of the two forecasts considering Pattern 2 has 10 units of elastic demand. \label{tab:patter12newcost}}
\end{table}

In this article, we propose to consider the elastic demand (demand responsive to demand shaping) in the forecasting process. We present a method to reallocate the elastic demand in historical data so that the variance of the optimized demand is minimized, leading to more effective forecasts.

\section{Elastic demand optimization}

Consider an illustrative example first. Figure \ref{fig:two_series_merge_demo} (left) shows two time series (referred to as demand series) each with 3 time slots. We want the difference between the demand series to be small.
Assuming each time slot of the demand series has 10 units of elastic demand that can be shifted to its adjacent time slots, we can reallocate the demand as follows so that the two series become identical. First shift 5 units from T1 to T2, resulting in the series shown in Figure \ref{fig:two_series_merge_demo} (middle). Then, shift 5 units from T2 to T3, resulting in two identical series, as shown in Figure \ref{fig:two_series_merge_demo} (right).

\def\myWidth{5}
\begin{figure}[h]
  \centering 
  \includegraphics[width=10cm]{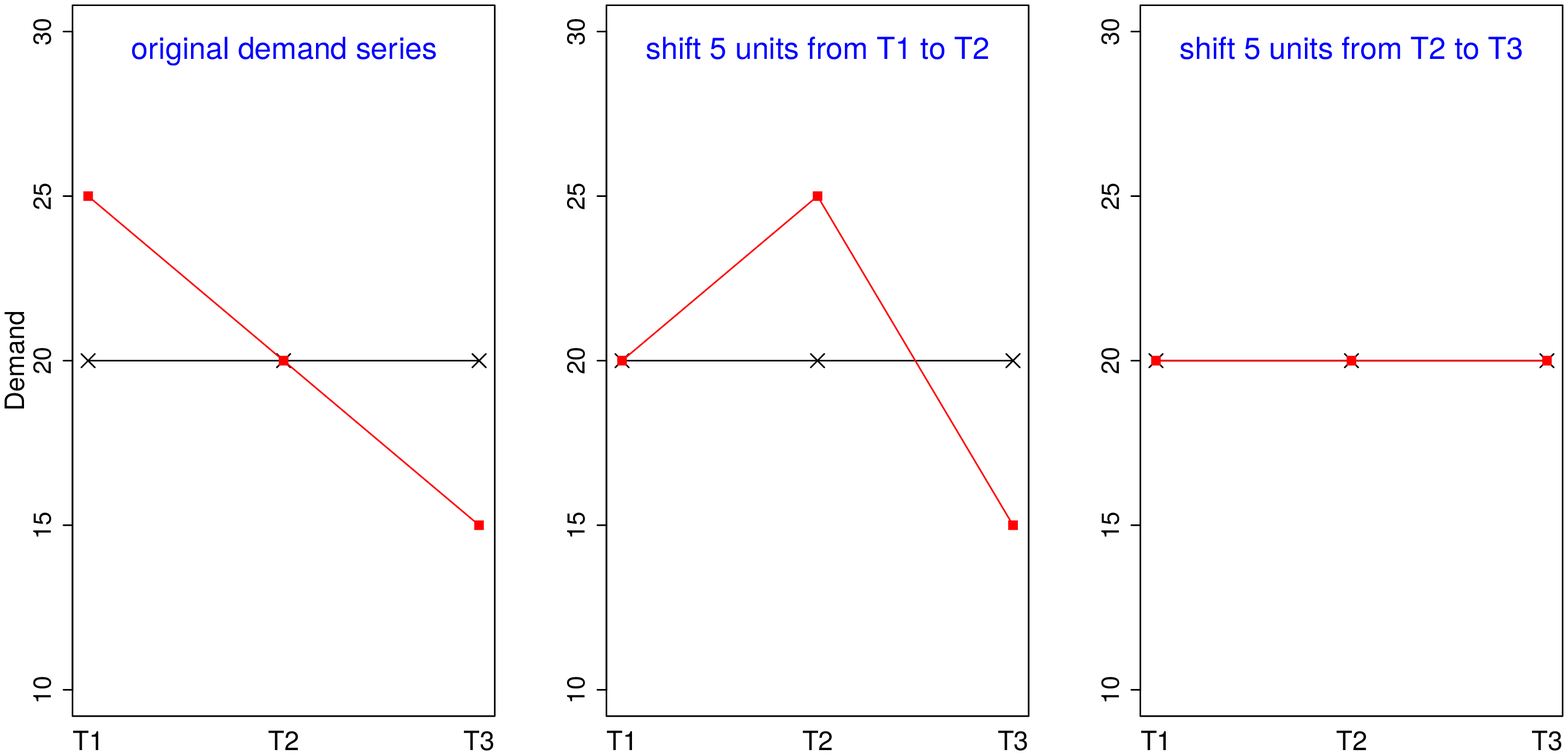}
  \caption{Reallocate elastic demand to make the two demand series identical.}
  \label{fig:two_series_merge_demo}
\end{figure}

From this example, we can generalize the goal, input, and output as follows.

\begin{itemize}
  \item Goal: shifting the right amount of elastic demand to minimize the variance between the new demand series.
  \item Input: historical demand series, and the amount of elastic demand.
  \item Output: shifted demand series
\end{itemize}

We propose the following formulation with a limitation that elastic demand can only be shifted between adjacent time slots.

Consider $K$ historical demand series, each including $T$ time slots. One example of such a series is hourly demand per day. Let $D_{k,t}$ denote the observed demand at time $t$ of the $k^{th}$ series. The goal is to forecast hourly demand $\tilde{D}_{t}$ based on the $K$ demand series data. Assuming demand at different time slots is independent, the mean of $D_{k,t}$ where $k$ = 1, \ldots, $K$ for a given $t$ is the solution in terms of minimizing the sum of squared error

\begin{equation*}
\begin{aligned}
& \underset{\tilde{D}_{t}}{\text{minimize}}
& &  ( D_{k,t} - \tilde{D}_{t} ) ^2
\end{aligned}
\end{equation*}

Consider a certain percentage of demand at $t$ of the $k^{th}$ series is elastic and shiftable between adjacent time slots. Let $x_{k,t,t+1}$ denote the percentage of $D_{k,t}$ that are shifted between $D_{k,t}$ and $D_{k,t+1}$. A positive $x_{k,t,t+1}$ indicates demand  shifted from $t$ to $t+1$, and a negative value indicates demand shifted from $t+1$ to $t$. The goal is to find $x_{k,t,t+1}$ (for all $k$ and $t$) to minimize the variance of the shifted demand.

\begin{equation*}
\begin{aligned}
& {\text{minimize}}
& &  \sum_{k,t} \{ D_{k,t-1} \cdot x_{k,t-1,t} + D_{x,t} \cdot (1-x_{k,t,t+1}) - \tilde{D}_{t} \} ^2 \\
& \text{subject to}
& & x_{k,t,t+1} \in [L_{k,t,t+1},U_{k,t,t+1}], \; t = 1, \ldots, T \\
& & & \tilde{D}_{t} \geq 0, t = 1, \ldots, T
\end{aligned}
\end{equation*}

where $L_{k,t,t+1}$ and $U_{k,t,t+1}$ are the lower and upper limits of $x_{k,t,t+1}$. $U (>=0)$ is essentially the percentage of elastic demand that can be shifted from $t$ to $t+1$, and $L (<=0)$ is the percentage of elastic demand shiftable from $t+1$ to $t$. Also, let $D_{k,0} = 0$, $x_{k,0,1} = 0$ and $x_{k,T,T+1}=0$.

In addition, when two solutions have similar variance, the one with a smaller $x_{k,t,t+1}$ may be preferred. To this end, we add a regularization term.

\begin{equation*}
\begin{aligned}
& {\text{minimize}}
& &  \sum_{k,t} \{ D_{k,t-1} \cdot x_{k,t-1,t} + D_{x,t} \cdot (1-x_{k,t,t+1}) - \tilde{D}_{t} \} ^2 + \lambda \cdot \sum_{k,t}x_{k,t,t+1}^2     \\
& \text{subject to}
& & x_{k,t,t+1} \in [L_{k,t,t+1},U_{k,t,t+1}], \; t = 1, \ldots, T \\
& & & \tilde{D}_{t} \geq 0, t = 1, \ldots, T
\end{aligned}
\end{equation*}

This is a convex optimization problem which means an optimal solution can be found. In addition, the same property holds with changes such as, using different $\lambda$s for different $x_{k,t,t+1}$, and regularizing the amount of shifted demand ($D_{x,t} * x_{k,t,t-1}$) instead of the percentage.

Let $\overline{D}_{k,t}$ denote the demand after being shifted, i.e., $\overline{D}_{k,t} = D_{k,t-1} \cdot x_{k,t,t-1} + D_{x,t} \cdot (1-x_{k,t,t+1})$. We can prove for any given $k$, the total demand is not changed after being shifted.

\begin{theorem}
For a given $k$, $\sum_{t}\overline{D}_{k,t} = \sum_{t}D_{k,t}$.
\end{theorem}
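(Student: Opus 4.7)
The plan is to prove the claim by direct algebraic manipulation: expand the definition of $\overline{D}_{k,t}$, split the sum, and exhibit a pairwise cancellation between the ``incoming'' shift term and the ``outgoing'' shift term.

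First I would write out
\[
\sum_{t=1}^{T}\overline{D}_{k,t} \;=\; \sum_{t=1}^{T} D_{k,t-1}\,x_{k,t-1,t} \;+\; \sum_{t=1}^{T} D_{k,t} \;-\; \sum_{t=1}^{T} D_{k,t}\,x_{k,t,t+1},
\]
so that the ``baseline'' piece $\sum_{t} D_{k,t}$ is isolated and we only need to show that the remaining two sums cancel. The intuition is conservation of mass: $D_{k,t}\,x_{k,t,t+1}$ is the amount leaving slot $t$ toward slot $t+1$, and the same quantity reappears as $D_{k,t-1}\,x_{k,t-1,t}$ in the shifted demand at slot $t+1$.

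Next I would reindex the first sum via $s=t-1$, obtaining $\sum_{s=0}^{T-1} D_{k,s}\,x_{k,s,s+1}$, which differs from the third sum $\sum_{t=1}^{T} D_{k,t}\,x_{k,t,t+1}$ only in the range of the index. The boundary conventions $x_{k,0,1}=0$ and $x_{k,T,T+1}=0$ stated in the setup (together with $D_{k,0}=0$) then let me extend or restrict these ranges so that the two sums coincide term by term. Their subtraction yields zero, which gives $\sum_{t}\overline{D}_{k,t}=\sum_{t}D_{k,t}$.

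There is no real obstacle here; the only point requiring care is the index shift and the explicit invocation of the boundary conditions, since without them the reindexed sums would pick up leftover endpoint terms at $t=0$ or $t=T+1$. I would therefore state the boundary assumptions explicitly at the start of the proof so that the telescoping/cancellation step is unambiguous.
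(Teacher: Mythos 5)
Your proposal is correct and matches the paper's own argument: both expand $\overline{D}_{k,t}$, cancel the outgoing term $D_{k,t}\,x_{k,t,t+1}$ against the incoming term $D_{k,t-1}\,x_{k,t-1,t}$ of the next slot, and dispose of the endpoints via $D_{k,0}=0$, $x_{k,0,1}=0$, $x_{k,T,T+1}=0$. Your explicit reindexing of the sums is just a cleaner way of writing the same term-by-term cancellation the paper displays.
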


\begin{proof}
$\sum_{t}\overline{D}_{k,t} = \sum_{t}\{ D_{k,t-1} \cdot x_{k,t-1,t} + D_{x,t} \cdot (1-x_{k,t,t+1})\} = D_{k,0}\cdot x_{k,0,1} + D_{k,1} \cdot (1-x_{k,1,2})
+ D_{k,1}\cdot x_{k,1,2} + D_{k,2} \cdot (1-x_{k,2,3}) + \ldots + D_{k,T-1}\cdot x_{k,T-1,T} + D_{k,T} \cdot (1-x_{k,T,T+1})    $

since $D_{k,0}$, $x_{k,0,1}$, $x_{k,T,T+1}$ are zeros, $\sum_{t}\overline{D}_{k,t} = \sum_{t}D_{k,t}$
\end{proof}

It should be noted that the following linear optimization formulation could achieve the same purpose.

\begin{equation*}
\begin{aligned}
& {\text{minimize}}
& &  \sum_{k,t} | D_{k,t-1} \cdot x_{k,t-1,t} + D_{x,t} \cdot (1-x_{k,t,t+1}) - \tilde{D}_{t} | + \lambda \cdot \sum_{k,t} | x_{k,t,t+1} |     \\
& \text{subject to}
& & x_{k,t,t+1} \in [L_{k,t,t+1},U_{k,t,t+1}], \; t = 1, \ldots, T \\
& & & \tilde{D}_{t} \geq 0, t = 1, \ldots, T
\end{aligned}
\end{equation*}

\section{Experiments}

\begin{table}[]
\scriptsize
\begin{center}
\begin{tabular}{c|cc|c|cc|c|ccc}
\hline
\\
\\[-2em]
& L & U & $\lambda$ & 9-10 & 10-11 & $\overline{var}$ & $p$ & $|p|$ & $\overline{|p|}$ \\ [0.4em]
\hline
Pattern 1 & & & & 10 & 20 & 50 & & & \\
Pattern 2 & & & & 20 & 10 & & & & \\
\hline
${Pattern\ 1}^*$ & -1 & 1 & 0 & 13 & 17 & 0 & -0.3 & 0.3 & 0.325 \\
${Pattern\ 2}^*$ & -1 & 1 & 0 & 13 & 17 & & 0.35 & 0.35 & \\
\hline
${Pattern\ 1}^*$ & 0 & 1 & 0 & 7 & 23 & 0 & 0.3 & 0.3 & 0.475 \\
${Pattern\ 2}^*$ & 0 & 1 & 0 & 7 & 23 & & 0.65 & 0.65 & \\
\hline
${Pattern\ 1}^*$ & -1 & 0 & 0 & 20 & 10 & 0 & -1 & 1 & 0.5 \\
${Pattern\ 2}^*$ & -1 & 0 & 0 & 20 & 10 & & 0 & 0 & \\
\hline
${Pattern\ 1}^*$ & -1 & 1 & 1 & 12 & 18 & 0 & -0.2 & 0.2 & 0.3 \\
${Pattern\ 2}^*$ & -1 & 1 & 1 & 12 & 18 & & 0.4 & 0.4 & \\
\hline
${Pattern\ 1}^*$ & -0.1 & 0.1 & 0 & 11 & 19 & 24.5 & -0.1 & 0.1 & 0.1 \\
${Pattern\ 2}^*$ & -0.1 & 0.1 & 0 & 18 & 12 & & 0.1 & 0.1 & \\
\hline
\end{tabular}
\caption{ Original and shifted demand (marked with $^*$) at different parameters, and the summary statistics. \label{tab:simpleexample}}
\end{center}
\end{table}

Consider the example shown in Table \ref{tab:patter12}. Table \ref{tab:simpleexample} shows original demand, and the shifted demand at different values of $L$ (percentage of elastic demand shiftable to the earlier time slot), $U$ (percentage of elastic demand shiftable to the later time slot) and $\lambda$. The following statistics are calculated: the average variance at each hour, the shifting percentage ($p$) from 9-10 to 10-11 for pattern 1 and pattern 2, the absolute values of $p$ ($|p|$), and the average of $|p|$ ($\overline{|p|}$).

When there is no limit on the amount of elastic demand, i.e., $L=1$ or/and $U=1$, the demand can be shifted so that the two patterns become identical and therefore the average variance is zero. As expected, when the magnitude of shifting percentage is penalized ($\lambda=1$), the average of the magnitude of $p$, $\overline{|p|}$, is reduced. When $U=0.1$ and $L=-0.1$, 10\% of the demand at 9-10 of pattern 1 is shifted from 10-11, and 10\% of the demand at 9-10 of pattern 2 is shifted to 10-11. The average variance is reduced but not zero, indicating the amount of elastic demand is not large enough for a zero variance.

Next, we consider the demand series shown in Figure \ref{fig:Time_series_original_1}. For simplicity, when applying the optimization framework, we let the lower and upper limits of the shifting percentages have the same absolute value, i.e., $U=-L$.

The average variance at different $U$s and $\lambda$s are shown in Figure \ref{fig:Max_p_vs_variance}. The relationship between $\overline{var}$ and $U$ is the same for $\lambda \in (0,1)$ (the two curves overlap). The average variance decreases quickly as $U$ increases from 0 and reaches a small value when $U$ gets close to 0.6. When $\lambda$ increases to 10 or 100, the average variance still decreases initially but remain at a larger value after some points. This is because when $\lambda$ gets large enough, the cost of increased shifting probabilities can exceed the cost of a larger variance.

\begin{figure}[!]
   \begin{minipage}{0.48\textwidth}
     \centering
     \includegraphics[width=1.05\linewidth]{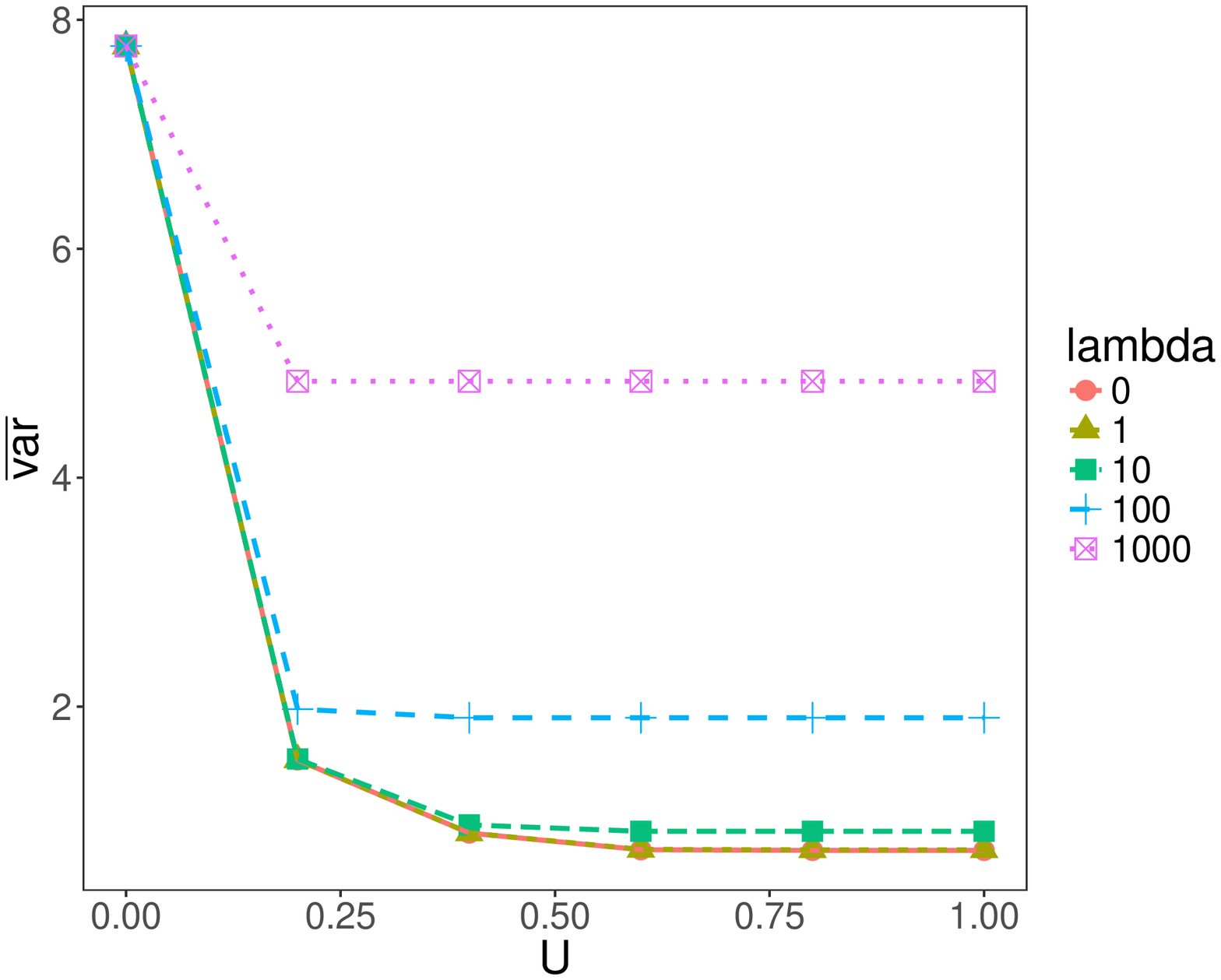}
     \caption{The average variance at different $U$s and $\lambda$s. \label{fig:Max_p_vs_variance}}
   \end{minipage}\hfill
   \begin{minipage}{0.48\textwidth}
     \centering
     \includegraphics[width=1.05\linewidth]{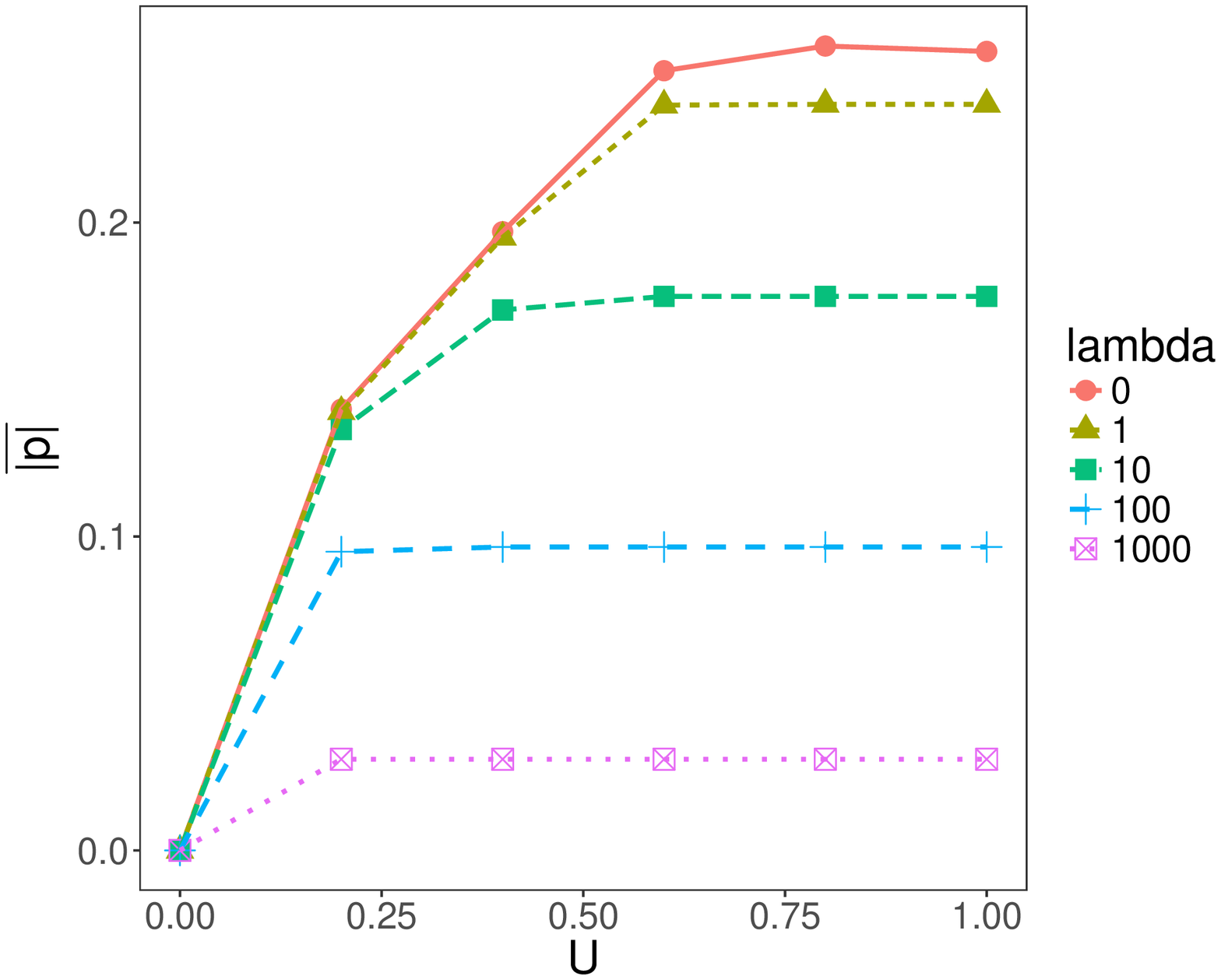}
     \caption{The average magnitude of shifting probabilities at different $U$s and $\lambda$s.\label{fig:Mean_p_vs_max_p} }
   \end{minipage}
\end{figure}

The average magnitudes of shifting probabilities at different $U$s and $\lambda$s are shown in Figure \ref{fig:Mean_p_vs_max_p}. In general, $\overline{|p|}$ increases as $U$ increases. However, with a larger $\lambda$, the growth of $\overline{|p|}$ is smaller.

Note for $\lambda=0$, the average magnitude of shifting probabilities decreases when $U$ increases from 0.8 to 1. This is because when $U$ is large enough, there can be multiple solutions ($p$) leading to the same variance reduction, and with a zero $\lambda$ there is no preference for a samller $p$. For cases with $\lambda>0$, $\overline{|p|}$ becomes at a constant after some point (when there are multiple solutions with the same variance reduction, the solution with a smaller $\sum_{k,t}x_{k,t,t+1}^2$ is preferred).

Figure \ref{fig:Demand_shifted_6_series} shows the demand series optimized with $\lambda=1$ and 0\% (original demand), 10\%, 20\%, 30\%, 40\%, and 50\% elastic demand, respectively. With larger elastic demand percentages, the variance of hourly demand become smaller, and the demand series tend to have clearer patterns.

Figure \ref{fig:Demand_variance_6_series} shows the average of hourly demand variance for each group of demand series in Figure \ref{fig:Demand_shifted_6_series}. As expected, a larger amount of elastic demand leads to a smaller variance. However, the first 10\% elastic demand produces the largest variance reduction.

\begin{figure}
   \begin{minipage}[t]{0.48\textwidth}
     \centering
     \includegraphics[width=1\textwidth]{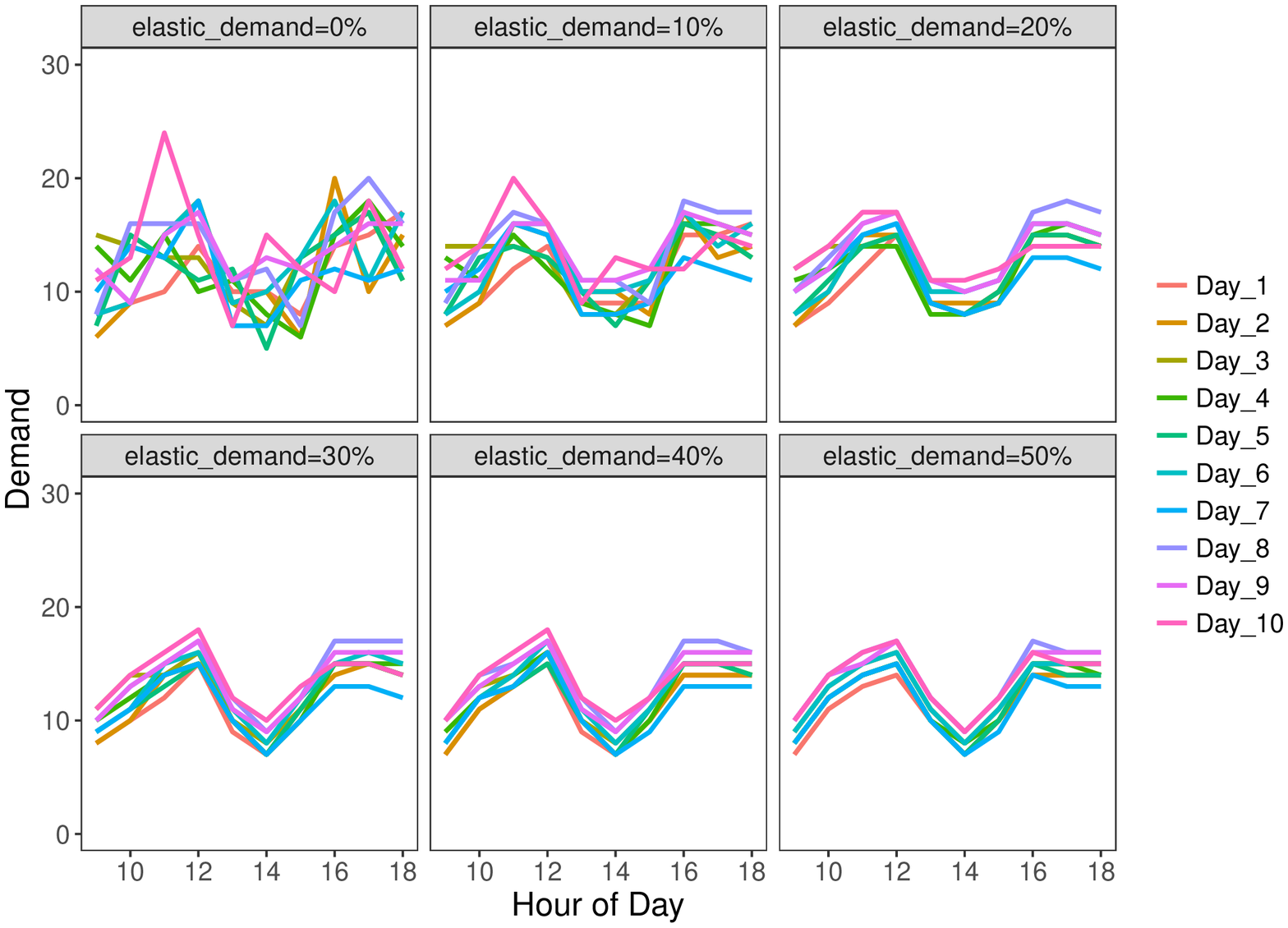}
     \caption{Original demand series and new demand series for different percentages of elastic demand. \label{fig:Demand_shifted_6_series}}
   \end{minipage}\hfill
   \begin{minipage}[t]{0.48\textwidth}
     \centering
     \includegraphics[width=1\textwidth]{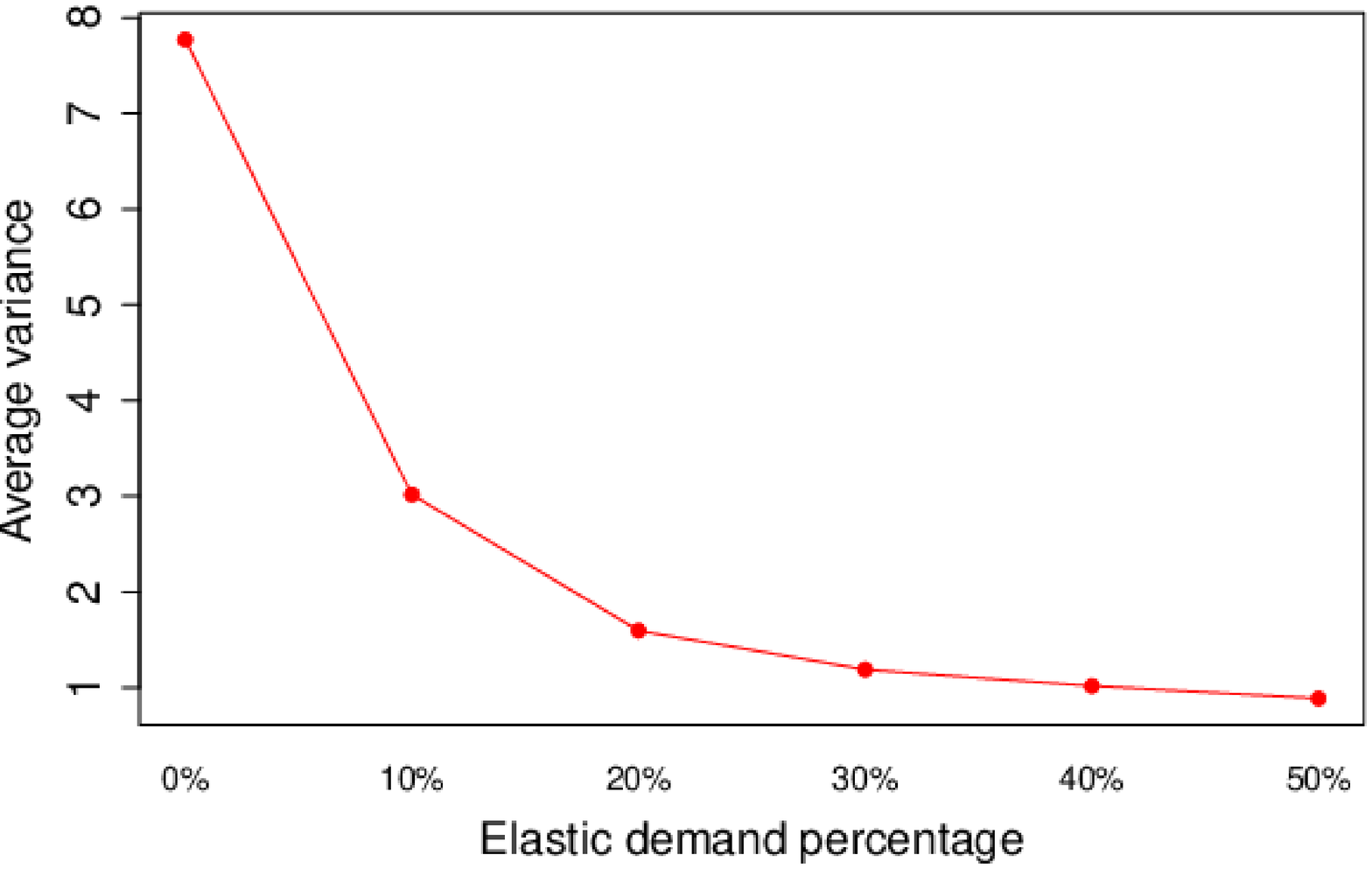}
     \caption{The average of hourly variance for each group of demand series in Figure \ref{fig:Demand_shifted_6_series}. \label{fig:Demand_variance_6_series}}
   \end{minipage}
\end{figure}

\section{Conclusions}
A large demand variance can result in a high cost due to lost sales or excessive supply. In this work, we showed historical elastic demand can be reallocated to reduce the variance, hence making demand forecasting more effective.

As Instacart has hourly delivery windows, we focus on hourly demand with a daily cycle and assume the elastic demand can be reallocated among the adjacent hours from the same day. If seasonality exists in the days of week, the proposed method can be applied to the demand series from different days of week separately.

To extend the method to handle daily demand with a weekly cycle or monthly demand with a yearly cycle, the formulation should be changed to allow the demand to be shiftable between the end of a cycle and the start of next cycle.

\bibliographystyle{natbib}
\bibliography{elastic_demand}

\end{document}